\documentclass[a4paper,UKenglish]{article}

\usepackage{amsmath,amssymb}
\usepackage{eurosym}

\usepackage{amssymb}
\usepackage{epstopdf}

\usepackage{amsthm}
\usepackage{tikz}
\usetikzlibrary{fit,calc,positioning,decorations.pathreplacing,matrix, shapes}
\usepackage{multirow}
\usepackage{multicol}
\usepackage{graphicx}
\usepackage{latexsym}
\usepackage[lined,linesnumbered,boxed]{algorithm2e}
\usepackage{enumerate}
\usepackage{float}
\usepackage{caption,subcaption}
\usepackage{array}


\newcommand{\A}{{\sf \bf A}}
\newcommand{\R}{{\sf \bf R}}
\newcommand{\SAT}{{{\sc SAT}}}
\newcommand{\NP}{{\sf NP}}

\newcommand{\Control}{{\sf Control}}
\newcommand{\PSA}{{\sf PSA}}
\newcommand{\PCA}{{\sf PCA}}
\newcommand{\coNP}{{\sf coNP}}
\DeclareMathOperator{\gr}{gr}
\DeclareMathOperator{\stb}{st}
\DeclareMathOperator{\pr}{pr}
\DeclareMathOperator{\co}{co}
\DeclareMathOperator{\sk}{sk}
\DeclareMathOperator{\cred}{cr}

\makeatletter
\DeclareRobustCommand*\cal{\@fontswitch\relax\mathcal}
\makeatother

\newcommand{\caf}{\ensuremath{{\cal CAF}}}
\newcommand{\iaf}{\ensuremath{{\cal IAF}}}
\newcommand{\af}{\ensuremath{{\cal AF}}}

\newtheorem{theorem}{Theorem}
\newtheorem{proposition}[theorem]{Proposition}
\newtheorem{lemma}[theorem]{Lemma}
\newtheorem{observation}[theorem]{Observation}
\newtheorem{definition}{Definition}
\newtheorem{example}{Example}

\bibliographystyle{plainurl}

\begin{document}
\title{Possible Controllability of Control Argumentation Frameworks -
  Extended Version}
\author{Jean-Guy Mailly\\LIPADE, Universit\'e de Paris,
  France\\jean-guy.mailly@u-paris.fr}
\date{}

\maketitle

\begin{abstract}
  The recent Control Argumentation Framework (CAF) is a
  generalization of Dung's Argumentation Framework which handles
  argumentation dynamics under uncertainty; especially it can be used
  to model the behavior of an agent which can anticipate future
  changes in the environment. Here we provide new insights on this
  model by defining the notion of possible controllability of a CAF. We study the complexity
  of this new form of reasoning for the four classical semantics, and we
  provide a logical encoding for reasoning with this framework.
\end{abstract}

Note: This paper has been accepted for publication at COMMA'20. Proofs were omitted from the published version. For refering this work, please cite:\\
\noindent\fbox{%
  \parbox{0.97\textwidth}{%
    Jean-Guy \textsc{Mailly}. \emph{Possible Controllability of
      Control Argumentation Frameworks}. Proceedings of the Eighth
    International Conference on Computational Models of Argument
    (COMMA'20): 283--294, 2020.
  }
}

\section{Introduction}
Abstract argumentation \cite{Dung95} has become an important subfield
of Knowledge Representation and Reasoning research in the last
decades. Intuitively, an abstract argumentation framework (AF) is a
directed graph where nodes are arguments and edges are relations
(usually attacks) between these arguments. The outcome of such an AF
is an evaluation of the arguments' acceptance (through extensions
\cite{Dung95,HOFASemantics}, labellings \cite{Caminada06} or rankings
\cite{AmgoudB13a}). The question of argumentation dynamics has arisen
more recently, and many different approaches have been proposed (see
{\em e.g.}
\cite{BoellaKT09a,CayrolSL10,BaumannB10,Coste-MarquisKMM14,DoutreHP14,Coste-MarquisKM15,Saint-CyrBCL16,WallnerNJ17,DoutreM17,DimopoulosMM18}).
Roughly speaking, the question of these works is ``how to modify an AF
to be consistent with a given piece of information?''. Such a piece
of information can be ``argument $a$ should be accepted in the outcome
of the AF''. A particular version of this problem is called {\em
  extension enforcement} \cite{BaumannB10,Baumann12,
  Coste-MarquisKM15, WallnerNJ17,DoutreM17}: it consists in modifying an AF s.t. a
given set of arguments becomes (included in) an extension of the
AF. The recently proposed {\em Control Argumentation Framework} (CAF)
\cite{DimopoulosMM18} is a generalization of Dung's AF which
incorporates different notions of uncertainty in the structure of the
framework. The {\em controllability} of a CAF w.r.t. a set of
arguments is the fact that, whatever happens in the uncertain part of
the CAF ({\em i.e.}  whatever is the real situation of the world), the
target set of arguments is accepted. This is somehow a generalization
of extension enforcement, where uncertainty is taken into account.

In this paper, we study what we call {\em possible controllability} (and
then, controllability defined in \cite{DimopoulosMM18} can be renamed
as {\em necessary controllability}). The idea of possible controllability
w.r.t. a target set of arguments is that this target should be
accepted in {\em at least} one of the possible completions of the
uncertain part. Necessary controllability trivially implies possible
controllability, while the converse is not true. 
This form of reasoning can be applied in different situations. Possible
controllability makes sense in situations where an agent is unable to
guarantee some result (the fact that some argument $a$ is accepted),
but she wants to be sure that the opposite result ($a$ is rejected) is
not necessary true. For instance, possible controllability is similar to
the reasoning of the defendant's lawyer during a trial. Thanks to the
principle of {\em presumption of innocence}, the lawyer does not have
to prove that the defendant {\em is} innocent, but he has to prove
that the defendant {\em may be} innocent. This means that if there is
some uncertainty in the case, the lawyer wants to exhibit the fact
that one possible world encompassed by this uncertainty implies that
his client is innocent.\footnote{On the opposite, necessary
  controllability \cite{DimopoulosMM18} is close to the reasoning
 of the prosecutor.} This means that the lawyer's knowledge
about the case can be represented by a CAF, and the lawyer wants to
guarantee that the argument ``the defendant is innocent'' is accepted
in at least one completion of the CAF, {\em i.e.} one possible world.
In this kind of scenario, possible controllability is particularly useful since it is (presumably) easier to
search for one completion that accepts the target instead of checking that the target is accepted
in each of the (exponentially many) completions.

The paper is organized as follows. We first recall the background
notions of logic and introduce the CAF setting in
Section~\ref{section:background}.  In
Section~\ref{section:weak-controllability} we define formally this new
form of controllability, and we determine the complexity of this reasoning problem
for the four classical semantics introduced by Dung. We also propose a QBF-based
encoding which allows to determine whether a CAF is possible controllable w.r.t. a target  and the stable semantics (and moreover, which allows to determine {\em how} to control it). We describe the
related work in Section~\ref{section:related-work}, and finally
Section~\ref{section:conclusion} concludes the paper and draws
interesting future research tracks.

\section{Background}\label{section:background}
\subsection{Propositional Logic and Quantified Boolean Formulas}
We consider a set $V$ of Boolean variables, {\em i.e.} variables which
can be assigned a value in $\mathbb{B} = \{0,1\}$, where $0$ and $1$
are associated respectively with {\em false} and {\em true}. Such
variables can be combined with connectives $\{\vee,\wedge,\neg\}$ to
build formulas. $x \vee y$ is true if at least one of the variables
$x, y$ is true; $x \wedge y$ is true if both $x, y$ are true; $\neg x$
is true is $x$ is false. Additional connectives can be defined, {\em
  e.g.}  $x \Rightarrow y$ is equivalent to $\neg x \vee y$;
$x \Leftrightarrow y$ is equivalent to
$(x \Rightarrow y) \wedge (y \Rightarrow x)$.  The definition of the
connectives is straightforwardly extended from variables to formulas
({\em e.g.} if $\phi$ and $\psi$ are formulas, then $\phi \wedge \psi$
is true when both formulas are true). A truth assignment on the set of
variables $V = \{x_1, \dots, x_n \}$ is a mapping
$\omega: V \rightarrow \mathbb{B}$.

Quantified Boolean Formulas (QBFs) are an extension of propositional
formulas with the universal and existential quantifiers. For instance,
the formula
$\exists x \forall y (x \vee \neg y) \wedge (\neg x \vee y)$ is
satisfied if there is a value for $x$ such that for all values of $y$
the proposition $(x \vee \neg y) \wedge (\neg x \vee y)$ is true. More
formally, a canonical QBF is a formula
${\cal Q}_1 X_1 {\cal Q}_2 X_2 \ldots {\cal Q}_n X_n \Phi$ where
$\Phi$ is a propositional formula,
${\cal Q}_i \in \{ \exists, \forall\}$,
${\cal Q}_i \not = {\cal Q}_{i+1}$, and $X_1, X_2, \ldots, X_n$
disjoint sets of propositional variables such that
$X_1 \cup X_2 \cup \ldots \cup X_n = V$.\footnote{If some variable
  $x \in V$ does not explicitly belong to any $X_i$, {\em i.e.}
  $X_1 \cup \dots \cup X_n \subset V$, then it implicitly means that
  $x$ can be existentially quantified at the rightmost level.} It is
well-known that QBFs span the polynomial hierarchy. For instance,
deciding whether the formula
$\exists X_1 \forall X_2 \dots {\cal Q}_i X_i \Phi$ is true is
$\Sigma^p_i$-complete.
The decision problem associated to QBFs of the form
$\exists V, \Phi$ is equivalent to the satisfiability problem for
propositional formulas (\SAT), which is well-known to be \NP-complete.
For more details about propositional logic, QBFs and complexity
theory, we refer the reader to
\cite{BiereHVMW09,QBFHandbook,AroraB09}.

\subsection{Abstract Argumentation and Control Argumentation Frameworks}
An {\em argumentation framework} (AF), introduced in~\cite{Dung95}, is
a directed graph $\af = \langle A, R \rangle$, where $A$ is
a set of {\em arguments}, and $R \subseteq A \times A$ is an {\em
  attack relation}.  The relation $a$ {\em attacks} $b$ is denoted by
$(a,b) \in R$. In this setting, we are not interested in the origin of
arguments and attacks, nor in their internal structure. Only their
relations are important to define the acceptance of arguments.

In \cite{Dung95}, different acceptability semantics were
introduced. They are based on two basic concepts: \textit{conflict-freeness} and
\textit{defence}. A set $S \subseteq A$ is:
\begin{itemize}
	\item conflict-free iff $\forall a, b \in S$, $(a,b) \not\in R$;
	\item admissible iff it is conflict-free, and defends each $a \in S$ against its attackers.
\end{itemize}
The semantics defined by Dung are as follows. An admissible set $S \subseteq A$ is:
\begin{itemize}
	\item a complete extension iff it contains every argument that it defends;
	\item a preferred extension iff it is a $\subseteq$-maximal complete extension;
	\item the unique grounded extension iff it is the $\subseteq$-minimal complete extension;
	\item a stable extension iff it attacks every argument in $A \setminus S$.
\end{itemize}

The sets of extensions of an $\af$, for these four semantics, are denoted (respectively) $\co(\af)$, $\pr(\af)$, $\gr(\af)$ and $\stb(\af)$.

Our approach could be adapted for any other extension semantics. Based on these
semantics, we can define the status of any (set of) argument(s), namely
{\em skeptically accepted} (belonging to each $\sigma$-extension),
{\em credulously accepted} (belonging to some $\sigma$-extension) and
{\em rejected} (belonging to no $\sigma$-extension). For
more details about argumentation semantics, we refer the reader to
\cite{Dung95,HOFASemantics}.\\

We introduce now the notions of CAF and (necessary) controllability \cite{DimopoulosMM18}.

\begin{definition}\label{def:caf}
A Control Argumentation Framework ({\em CAF}) is a triple
  $\caf = \langle {\cal F}, {\cal C}, {\cal U}\rangle$ where $\cal F$
  is the {\em fixed part}, $\cal U$ is the {\em uncertain part} and
  $\cal C$ is the {\em control part} of $\caf$ with:

\begin{itemize}
\item $\cal F$ = $\langle A_F, \rightarrow \rangle$ where $A_F$ is a
  set of arguments and
  $\rightarrow \subseteq (A_F \cup A_U) \times (A_F \cup A_U)$ is an
  attack relation.
\item $\cal U$ =
  $\langle A_U, (\rightleftarrows \cup \dashrightarrow) \rangle$ where
  $A_U$ is a set of arguments,
  $\rightleftarrows \subseteq (((A_U \cup A_F) \times (A_U \cup A_F))
  \setminus \rightarrow)$ is a conflict relation and
  $ \dashrightarrow \subseteq (((A_U \cup A_F) \times (A_U \cup A_F))
  \setminus \rightarrow)$ is an attack relation, with
  $\rightleftarrows \cap \dashrightarrow = \emptyset$.
\item ${\cal C} = \langle A_C, \Rrightarrow \rangle$ where $A_C$ is a
  set of arguments, and
  $\Rrightarrow \subseteq \{(a_i,a_j) \mid a_i \in A_C,\ a_j \in A_F
  \cup A_C \cup A_U\}$ is an attack relation.
\end{itemize}
$A_F, A_U$ and $A_C$ are disjoint subsets of arguments.
\end{definition}

The different sets of arguments and attacks have different
meanings. The fixed part ${\cal F}$ represents the part of the system
which cannot be influenced either by the agent or by the
environment. This means that if $a \in A_F$, then it is sure that $a$
is an ``active'' argument (for instance, all of its premises are true,
and cannot be falsified). Similarly, if $(a,b) \in \rightarrow$, the
attack from $a$ to $b$ is actually part of the system and cannot be
removed.

${\cal U}$ is the uncertain part of the system. This means that it
cannot be influenced by the agent, but it can be modified by the
environment (in a wide way, this can also represent the possible
actions of other agents). The uncertainty can appear in different
ways. First, if $a \in A_U$, this means that there is some uncertainty
about the presence of an argument (for instance, the agent is not sure
whether her opponent in the debate will state argument $a$, or she is
not sure whether the premises of $a$ will be true at some moment). If
$(a,b) \in \rightleftarrows$, then the agent is sure that there is a
conflict between $a$ and $b$, but she is not sure of the direction of
the attack (this could be an attack $(a,b)$, an attack $(b,a)$, or
even both at the same time). This is possible, for instance, if the
agent is not sure about some preference between $a$ and $b$
\cite{AmgoudV14}. Finally, $(a,b) \in \dashrightarrow$ means that the
agent is not sure whether there is actually an attack from $a$ to $b$.

 The last part ${\cal C}$ is the {\em control} part. This is the part
 of the system which can be influenced by the agent. This means that
 the agent has to choose which arguments she will actually use
 (uttering them in the debate, or making an action to switch their
 premises to true). When the agent uses a subset
 $A_{conf} \subseteq A_C$, called a {\em configuration}, this defines
 a configured CAF where the arguments from $A_C \setminus A_{conf}$
 (and the attacks concerning them) are removed. We illustrate these
 concepts on an example adapted from \cite{DimopoulosMM18}.

\begin{example}\label{example:running-caf}
We define $\caf = \langle F, C, U\rangle$ as follows:
\begin{itemize}
  \item ${\cal F} = \langle \{a_1,a_2,a_3,a_4, a_5\}, \{(a_2,a_1), (a_3,a_1), (a_4,a_2), (a_4,a_3)\}\rangle$;
  \item ${\cal U} = \langle \{a_6\}, \rightleftarrows \cup \dashrightarrow \rangle$, with $\rightleftarrows = \{(a_6,a_4)\}$, and $\dashrightarrow = \{(a_5,a_1)\}$;
  \item ${\cal C} = \langle \{a_7,a_8,a_9\}, \{(a_7,a_5),(a_7,a_9), (a_8,a_6), (a_8,a_7), (a_9,a_6)\}\rangle$.
\end{itemize}
$\caf$ is given at Figure~\ref{fig:example-caf}. The configuration of
$\caf$ by $A_{conf} = \{a_7,a_9\}$ yields the configured CAF $\caf'$
described at Figure~\ref{fig:configured-caf}. On the figures,
arguments from $A_F$, $A_U$ and $A_C$ are respectively represented as
circle nodes, dashed square nodes and plain square nodes. Similarly,
the attacks from $\rightarrow$, $\rightleftarrows$,
$\dashrightarrow$ and $\Rrightarrow$ are represented (respectively) as plain,
double-headed dashed, dotted and bold arrows.

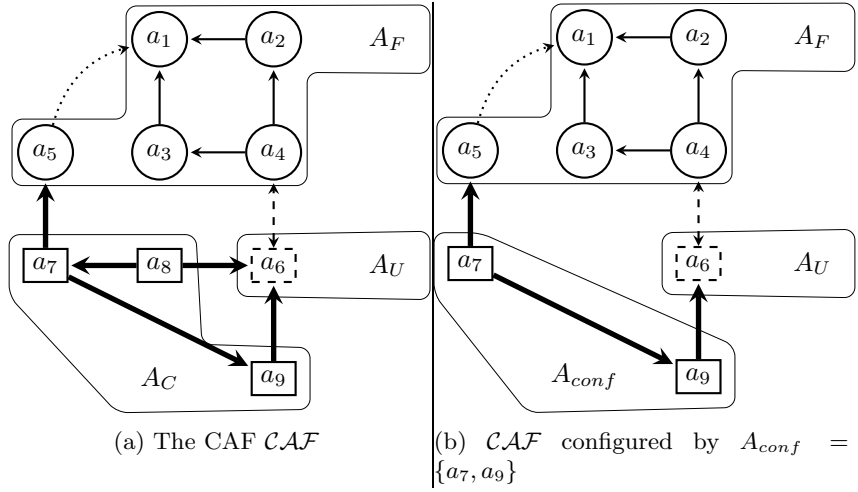
\begin{figure}[h]
  \centering
  \begin{subfigure}[t]{0.45\textwidth}
    \centering
\newcommand{\shiftpoints}{5pt}
\begin{tikzpicture}[->,>=stealth,shorten >=1pt,auto,node distance=1.5cm,
                thick,main node/.style={circle,draw,font=\bfseries},control/.style={rectangle,draw,font=\bfseries},uncertain/.style={rectangle,draw,dashed,font=\bfseries},shifttl/.style={shift={(-\shiftpoints,\shiftpoints)}},shifttr/.style={shift={(\shiftpoints,\shiftpoints)}},shiftbl/.style={shift={(-\shiftpoints,-\shiftpoints)}},shiftbr/.style={shift={(\shiftpoints,-\shiftpoints)}}]
\node[main node] (a1) {$a_1$};
\node[main node] (a2) [right of=a1] {$a_2$};
\node[main node] (a3) [below of=a1] {$a_3$};
\node[main node] (a4) [below of=a2] {$a_4$};
\node[main node] (a5) [left of=a3] {$a_5$};
\node[right of=a2] (f) {$A_F$};

\path[->] (a2) edge (a1)
    (a3) edge (a1)
    (a4) edge (a2)
    (a4) edge (a3);

\path[draw,rounded corners,very thin]
               ([shifttl] a1.north west)
            -- ([shifttr] f.north east)
            -- ([shiftbr] f.south east)
            -- ([shiftbr] a2.south east)
            -- ([shiftbr] a4.south east)
            -- ([shiftbl] a3.south west)
            -- ([shiftbl] a5.south west)
            -- ([shifttl] a5.north west)
            -- ([shifttl] a3.north west)
            -- ([shiftbl] a1.south west)
            -- cycle;

\node[uncertain] (a6) [below of=a4] {$a_6$};
\node[right of=a6] (u) {$A_U$};

\path[->,dotted] (a5) edge[bend left] (a1);
\path[<->,dashed] (a6) edge (a4);

\path[draw,rounded corners,very thin]
                ([shifttr] u.north east)
             -- ([shifttl] a6.north west)
             -- ([shiftbl] a6.south west)
             -- ([shiftbr] u.south east)
             -- cycle ;

\node[control] (a7) [below of=a5] {$a_7$};
\node[control] (a8) [left of=a6] {$a_8$};
\node[control] (a9) [below of=a6] {$a_9$};
\node[below of=a8] (c) {$A_C$};

\path[line width=2pt] (a7) edge (a5)
     (a8) edge (a7)
     (a8) edge (a6)
     (a9) edge (a6)
     (a7) edge (a9);

\path[draw,rounded corners,very thin]
                ([shifttr] a9.north east)
             -- ([shifttr] c.north east)
             -- ([shifttr] a8.north east)
             -- ([shifttl] a7.north west)
             -- ([shiftbl] a7.south west)
             -- ([shiftbl] c.south west)
             -- ([shiftbr] a9.south east)
             -- cycle;
\end{tikzpicture}
\caption{The CAF $\caf$\label{fig:example-caf}}
\end{subfigure}
\vline
\begin{subfigure}[t]{0.45\textwidth}
  \centering
\newcommand{\shiftpoints}{5pt}
\begin{tikzpicture}[->,>=stealth,shorten >=1pt,auto,node distance=1.5cm,
                thick,main node/.style={circle,draw,font=\bfseries},control/.style={rectangle,draw,font=\bfseries},uncertain/.style={rectangle,draw,dashed,font=\bfseries},shifttl/.style={shift={(-\shiftpoints,\shiftpoints)}},shifttr/.style={shift={(\shiftpoints,\shiftpoints)}},shiftbl/.style={shift={(-\shiftpoints,-\shiftpoints)}},shiftbr/.style={shift={(\shiftpoints,-\shiftpoints)}}]
\node[main node] (a1) {$a_1$};
\node[main node] (a2) [right of=a1] {$a_2$};
\node[main node] (a3) [below of=a1] {$a_3$};
\node[main node] (a4) [below of=a2] {$a_4$};
\node[main node] (a5) [left of=a3] {$a_5$};
\node[right of=a2] (f) {$A_F$};

\path[->] (a2) edge (a1)
    (a3) edge (a1)
    (a4) edge (a2)
    (a4) edge (a3);

\path[draw,rounded corners,very thin]
               ([shifttl] a1.north west)
            -- ([shifttr] f.north east)
            -- ([shiftbr] f.south east)
            -- ([shiftbr] a2.south east)
            -- ([shiftbr] a4.south east)
            -- ([shiftbl] a3.south west)
            -- ([shiftbl] a5.south west)
            -- ([shifttl] a5.north west)
            -- ([shifttl] a3.north west)
            -- ([shiftbl] a1.south west)
            -- cycle;

\node[uncertain] (a6) [below of=a4] {$a_6$};
\node[right of=a6] (u) {$A_U$};

\path[->,dotted] (a5) edge[bend left] (a1);
\path[<->,dashed] (a6) edge (a4);

\path[draw,rounded corners,very thin]
                ([shifttr] u.north east)
             -- ([shifttl] a6.north west)
             -- ([shiftbl] a6.south west)
             -- ([shiftbr] u.south east)
             -- cycle ;

\node[control] (a7) [below of=a5] {$a_7$};
\node[control] (a9) [below of=a6] {$a_9$};
\node[below of=a8] (c) {$A_{conf}$};

\path[line width=2pt] (a7) edge (a5)
     (a9) edge (a6)
     (a7) edge (a9);

\path[draw,rounded corners,very thin]
                ([shifttr] a9.north east)
             -- ([shifttl] a7.north east)
             -- ([shifttl] a7.north west)
             -- ([shiftbl] a7.south west)
             -- ([shiftbl] c.south west)
             -- ([shiftbr] a9.south east)
             -- cycle;
\end{tikzpicture}
\caption{$\caf$ configured by $A_{conf} = \{a_7,a_9\}$ \label{fig:configured-caf}}
\end{subfigure}
\caption{A CAF and a configured CAF}
\end{figure}
\end{example}

Now we recall the notion of completion, borrowed from
\cite{Coste-MarquisDKLM07}, and adapted to CAFs in
\cite{DimopoulosMM18}. Intuitively, a completion is a classical AF
which describes a situation of the world coherent with the uncertain
information encoded in the CAF.

\begin{definition}  \label{compldef}
  Given $\caf = \langle F, C, U \rangle$, a completion of
  $\caf$ is $\af$ = $\langle A, R \rangle$, s.t.
  \begin{itemize}
    \item $A = A_F \cup A_C \cup A_{comp}$ where $A_{comp} \subseteq A_U$;
    \item if $(a,b) \in R$, then $(a,b) \in \rightarrow \cup \rightleftarrows \cup \dashrightarrow \cup \Rrightarrow$;
    \item if $(a,b) \in \rightarrow$, then $(a,b) \in R$;
    \item if $(a,b) \in \rightleftarrows$ and $a,b \in A$, then $(a,b) \in R$ or $(b,a) \in R$;
    \item if $(a,b) \in \Rrightarrow$ and $a,b \in A$, then $(a,b) \in R$.
  \end{itemize}
\end{definition}

\begin{example}[Continuation of Example~\ref{example:running-caf}]
We describe two possible completions of $\caf'$. First, we consider a
completion $\af_1$ where the attack $(a_5,a_1)$ is not included, while
the argument $a_6$ (with the attack $(a_6,a_4)$) is included. Another
possible completion is $\af_2$, where $a_6$ is not included (so,
neither the attacks related to it) while the attack $(a_5,a_1)$ is included.

\begin{figure}[h]
  \centering
  \begin{subfigure}[t]{0.45\textwidth}
    \centering
\begin{tikzpicture}[->,>=stealth,shorten >=1pt,auto,node distance=1.5cm,
                thick,main node/.style={circle,draw,font=\bfseries}]
\node[main node] (a1) {$a_1$};
\node[main node] (a2) [right of=a1] {$a_2$};
\node[main node] (a3) [below of=a1] {$a_3$};
\node[main node] (a4) [below of=a2] {$a_4$};

\path[->] (a2) edge (a1)
    (a3) edge (a1)
    (a4) edge (a2)
    (a4) edge (a3);

\node[main node] (a5) [below of=a3] {$a_5$};
\node[main node] (a6) [right of=a5] {$a_6$};

\path[->] (a6) edge (a4);

\node[main node] (a7) [left of=a5] {$a_7$};
\node[main node] (a9) [right of=a6] {$a_9$};

\path[->] (a7) edge (a5)
     (a9) edge (a6)
     (a7) edge[bend right] (a9);

\end{tikzpicture}
\caption{$\af_1$}
\end{subfigure}
\vline
\begin{subfigure}[t]{0.45\linewidth}
  \centering
\begin{tikzpicture}[->,>=stealth,shorten >=1pt,auto,node distance=1.5cm,
                thick,main node/.style={circle,draw,font=\bfseries}]
\node[main node] (a1) {$a_1$};
\node[main node] (a2) [right of=a1] {$a_2$};
\node[main node] (a3) [below of=a1] {$a_3$};
\node[main node] (a4) [below of=a2] {$a_4$};

\path[->] (a2) edge (a1)
    (a3) edge (a1)
    (a4) edge (a2)
    (a4) edge (a3);

\node[main node] (a5) [below of=a3] {$a_5$};

\path[->] (a5) edge[bend left] (a1);

\node[main node] (a7) [left of=a5] {$a_7$};
\node[main node] (a9) [right of=a5] {$a_9$};

\path[->] (a7) edge (a5)
     (a7) edge[bend right] (a9)
     ;

\end{tikzpicture}
\caption{$\af_2$}
\end{subfigure}
\caption{Two possible completions of $\caf'$}
\end{figure}
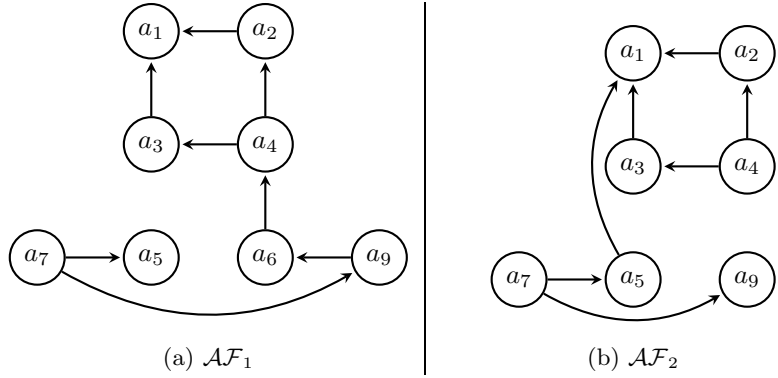
\end{example}

Now, a CAF is necessary controllable w.r.t. a target $T \subseteq A_F$
if the agent can configure it in a way which guarantees that $T$ is
accepted in every completion of the configured CAF. This necessary
controllability has two versions, depending on the kind of acceptance
under consideration (skeptical or credulous).

\begin{definition}
  Given a set of arguments $T \subseteq A_F$ and a semantics $\sigma$,
  $\caf$ is {\em necessary skeptically (resp. credulously)
    controllable} 
     w.r.t. $T$ and $\sigma$ iff $\exists A_{conf} \subseteq A_C$ s.t. $T$ is included
     in each (resp. some) $\sigma$-extension of each completion of
  $\caf' = \langle F, C', U\rangle$, with
  $C' = \langle A_{conf}, \{(a_i,a_j) \in \Rrightarrow \mid
  a_i, a_j \in (A_F \cup A_U \cup A_{conf})\}\rangle$.
\end{definition}

\cite{DimopoulosMM18} proposes a QBF-based method to determine whether
a CAF is necessary controllable, and to obtain the corresponding
configuration if it exists.

\section{Possible Controllability}\label{section:weak-controllability}
\subsection{Formal Definition of Possible Controllability}
The intuition of necessary controllability is that the agent is
satisfied when its target is reached in every possible world encoded
by the uncertain information in the CAF. While this is an interesting
property (especially for applications like negotiation
\cite{DimopoulosMM19}), this may seem unrealistic for some
applications, where the graph is built in such a way that some
completions cannot accept the target. Here, we adapt the definition of
controllability to consider that the agent is satisfied whether there
exists at least one possible world ({\em i.e.}  one completion) which
accepts the target.

\begin{definition}
  Given a set of arguments $T \subseteq A_F$ and a semantics $\sigma$,
 $\caf$ is {\em possibly skeptically (resp. credulously) controllable}
 w.r.t. $T$ and $\sigma$ iff $\exists A_{conf} \subseteq A_C$ s.t. $T$ is included
  in each (resp. some) $\sigma$-extension of some completion
  of $\caf' = \langle F, C', U\rangle$, with
  $C' = \langle A_{conf}, \{(a_i,a_j) \in \Rrightarrow \mid
  a_i, a_j \in (A_F \cup A_U \cup A_{conf})\}\rangle$.
\end{definition}

\begin{observation}
  Given a set of arguments $T \subseteq A_F$ and a semantics $\sigma$,
  if $\caf$ is necessary skeptically (resp. credulously) controllable
  w.r.t. $T$ and $\sigma$, then $\caf$ is possibly skeptically
  (resp. credulously) controllable w.r.t. $T$ and $\sigma$. The
  converse is false.
\end{observation}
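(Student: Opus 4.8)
The statement splits into two independent claims: the implication (necessary controllability $\Rightarrow$ possible controllability) and its strictness (the converse fails). The plan is to settle the implication by a simple quantifier-weakening argument resting on the fact that every configured CAF has at least one completion, and to settle strictness with a single minimal counterexample that works uniformly for all four semantics and for both the skeptical and the credulous reading.

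For the implication I would keep the very same configuration $A_{conf}$ that witnesses necessary controllability and argue that it also witnesses possible controllability. The two notions differ solely in the outer quantifier over completions: necessary controllability demands that the target be accepted (skeptically or credulously) in \emph{each} completion of $\caf'$, whereas possible controllability only asks for \emph{some} completion. The only fact needed is therefore that the set of completions of $\caf'$ is non-empty, since a universally quantified statement over a non-empty domain entails its existential weakening, the inner quantifier over $\sigma$-extensions being left untouched (so the argument is identical for the skeptical and the credulous variant). I would exhibit one completion explicitly: take $A_{comp} = A_U$ so that $A = A_F \cup A_{conf} \cup A_U$ contains every argument, and let $R$ consist of all attacks of $\rightarrow$ and $\Rrightarrow$ together with exactly one chosen direction for each conflict in $\rightleftarrows$ and none of the optional attacks from $\dashrightarrow$. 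Having all arguments present makes every clause of Definition~\ref{compldef} satisfiable without tension with the global requirement $R \subseteq A \times A$, so this is a genuine completion and the implication follows.

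For the converse I would build the smallest CAF in which the target can be defeated in some but not all completions and in which the agent has no control argument to prevent this. Concretely, take $A_F = \{a\}$ with no fixed attack, $A_U = \{b\}$ with the single optional attack $b \dashrightarrow a$ (and $\rightleftarrows = \emptyset$), $A_C = \emptyset$, and target $T = \{a\}$. Since $A_C = \emptyset$, the only configuration is $A_{conf} = \emptyset$. The completion $\af_1 = \langle \{a\}, \emptyset \rangle$ obtained by leaving $b$ out has $\{a\}$ as its unique extension under all four semantics, so $T$ is accepted there and the CAF is both possibly skeptically and possibly credulously controllable. However the completion $\af_2 = \langle \{a,b\}, \{(b,a)\} \rangle$ has $\{b\}$ as its unique grounded, complete, preferred and stable extension, so $a$ lies in no $\sigma$-extension of $\af_2$; hence $T$ fails there, which rules out necessary controllability in both senses and for all four semantics at once.

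I do not expect a real obstacle. The implication is a one-line quantifier weakening once non-emptiness of the completion set is recorded, and choosing $A_{comp} = A_U$ rather than $A_{comp} = \emptyset$ is the cleanest way to sidestep the mild tension between the clause forcing fixed attacks into $R$ and the global constraint $R \subseteq A \times A$. For the counterexample the only point to check carefully is that $\af_2$ rejects the target under \emph{every} semantics simultaneously; this is immediate because $\af_2$ has the single extension $\{b\}$ shared by the grounded, complete, preferred and stable semantics, so no case split on $\sigma$ is needed.
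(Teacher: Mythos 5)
Your proposal is correct and follows essentially the same route as the paper: the implication is the immediate quantifier weakening over the (non-empty) set of completions, and the strictness of the implication is witnessed by a concrete CAF with an uncertain attack on the target and no control arguments, exactly as the paper illustrates with its running example (which is not necessarily controllable w.r.t.\ $\{a_1\}$ but is possibly controllable). Your minimal two-argument counterexample is in fact slightly cleaner than the paper's, since it handles all four semantics and both acceptance modes uniformly with a single unique-extension argument.
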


\begin{example}[Continuation of Example~\ref{example:running-caf}]
We observe that  $\caf$ from the previous example is not
necessary skeptically controllable w.r.t. the target $\{a_1\}$. Indeed,
\begin{itemize}
\item if $A_{conf} = \{a_7, a_8, a_9\}$, then because of the attack
  $(a_8, a_7)$, the target is not defended against the potential
  threat $(a_5,a_1) \in \dashrightarrow$. The same thing happens if
  $A_{conf} = \{a_7, a_8\}$ or   $A_{conf} = \{a_8, a_9\}$.
\item if $A_{conf} = \{a_7, a_9\}$, this time the target is not
  defended against the potential threat coming from $a_6$ (in the
  completions where $a_6$ belongs to the system, along with the attack
  $(a_6,a_4)$, $a_1$ is not accepted).
\item if $A_{conf}$ is one of the three possible singletons, then
  again $a_1$ is not accepted in every completion (since either $a_5$
  or $a_6$ is unattacked).
\end{itemize}
On the opposite, it is possible to configure $\caf$ is such a way that
$a_1$ is skeptically accepted in at least one completion. For
instance,
Figure~\ref{fig:success-configured-caf} describes such a configured CAF, with
a successful completion given at
Figure~\ref{fig:success-configured-completion}.

\begin{figure}[h]
  \centering
  \begin{subfigure}[t]{0.45\linewidth}
    \centering
\newcommand{\shiftpoints}{5pt}
\begin{tikzpicture}[->,>=stealth,shorten >=1pt,auto,node distance=1.5cm,
                thick,main node/.style={circle,draw,font=\bfseries},control/.style={rectangle,draw,font=\bfseries},uncertain/.style={rectangle,draw,dashed,font=\bfseries},shifttl/.style={shift={(-\shiftpoints,\shiftpoints)}},shifttr/.style={shift={(\shiftpoints,\shiftpoints)}},shiftbl/.style={shift={(-\shiftpoints,-\shiftpoints)}},shiftbr/.style={shift={(\shiftpoints,-\shiftpoints)}}]
\node[main node] (a1) {$a_1$};
\node[main node] (a2) [right of=a1] {$a_2$};
\node[main node] (a3) [below of=a1] {$a_3$};
\node[main node] (a4) [below of=a2] {$a_4$};
\node[main node] (a5) [left of=a3] {$a_5$};
\node[right of=a2] (f) {$A_F$};

\path[->] (a2) edge (a1)
    (a3) edge (a1)
    (a4) edge (a2)
    (a4) edge (a3);

\path[draw,rounded corners,very thin]
               ([shifttl] a1.north west)
            -- ([shifttr] f.north east)
            -- ([shiftbr] f.south east)
            -- ([shiftbr] a2.south east)
            -- ([shiftbr] a4.south east)
            -- ([shiftbl] a3.south west)
            -- ([shiftbl] a5.south west)
            -- ([shifttl] a5.north west)
            -- ([shifttl] a3.north west)
            -- ([shiftbl] a1.south west)
            -- cycle;

\node[uncertain] (a6) [below of=a4] {$a_6$};
\node[right of=a6] (u) {$A_U$};

\path[->,dotted] (a5) edge[bend left] (a1);
\path[<->,dashed] (a6) edge (a4);

\path[draw,rounded corners,very thin]
                ([shifttr] u.north east)
             -- ([shifttl] a6.north west)
             -- ([shiftbl] a6.south west)
             -- ([shiftbr] u.south east)
             -- cycle ;

\node[control] (a7) [below of=a5] {$a_7$};
\node[left of=a6] (c) {$A_{conf}$};

\path[line width=2pt] (a7) edge (a5);

\path[draw,rounded corners,very thin]
             ([shifttr] c.north east)
             -- ([shifttl] a7.north west)
             -- ([shiftbl] a7.south west)
             -- ([shiftbr] c.south east)
             -- cycle;
\end{tikzpicture}
\caption{$\caf$ configured by $A_{conf} = \{a_7\}$\label{fig:success-configured-caf}}
\end{subfigure}
~\vline~
\begin{subfigure}[t]{0.45\linewidth}
  \centering
\newcommand{\shiftpoints}{5pt} 
\begin{tikzpicture}[->,>=stealth,shorten >=1pt,auto,node distance=1.5cm,
                thick,main node/.style={circle,draw,font=\bfseries}]
\node[main node] (a1) {$a_1$};
\node[main node] (a2) [right of=a1] {$a_2$};
\node[main node] (a3) [below of=a1] {$a_3$};
\node[main node] (a4) [below of=a2] {$a_4$};

\path[->] (a2) edge (a1)
    (a3) edge (a1)
    (a4) edge (a2)
    (a4) edge (a3);

\node[main node] (a5) [left of=a3] {$a_5$};
\node[main node] (a6) [below of=a4] {$a_6$};

\path[->] (a4) edge (a6)
    (a5) edge[bend left] (a1);

\node[main node] (a7) [below of=a5] {$a_7$};

\path[->] (a7) edge (a5);

\end{tikzpicture}
\caption{A successful completion of the CAF \label{fig:success-configured-completion}}
\end{subfigure}
\caption{A configured CAF and a successful completion}
\end{figure}
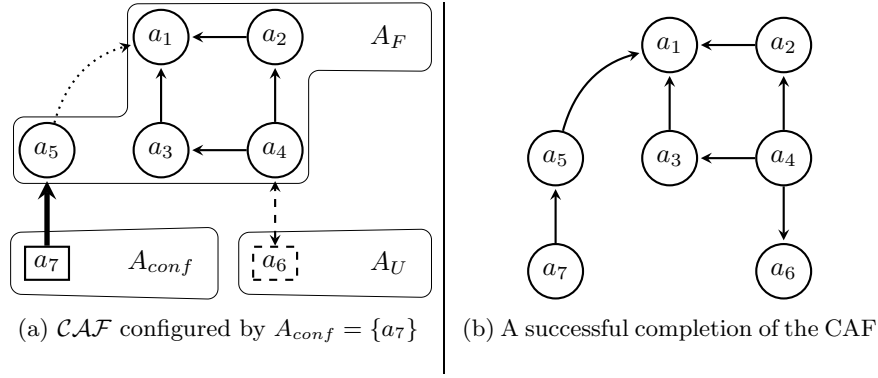
\end{example}

\subsection{Computational Complexity of Possible Controllability}\label{section:complexity}
Now we focus on the computational complexity of deciding whether a CAF
is possibly controllable. Formally, for $x \in \{\sk,\cred\}$ standing
respectively for ``skeptically'' and ``credulously'', and $\sigma \in \{\co, \pr, \gr, \stb\}$, we study the
decision problem:
\begin{description}
\item[$\Control_{\sigma,p,x}^{\caf,T}$] Is the CAF \caf~ possibly $x$-controllable w.r.t. $\sigma$ and $T$?
\end{description}
  
The proofs for hardness rely on complexity results for Incomplete Argumentation Frameworks (IAFs) \cite{BaumeisterNR18}. Let us formally introduce IAFs.

\begin{definition}
An Incomplete Argumentation Framework ({\em IAF}) is a tuple
$\iaf = \langle A, A^?, R, R^?\rangle$ where $A$ and $A^?$ are disjoint sets of arguments, and $R, R^? \subseteq (A \cup A^?) \times (A \cup A^?)$ are disjoint sets of attacks.
\end{definition}

The arguments and attacks in $A^?$ and $R^?$ are uncertain, similarly to the arguments $A_U$ and the attacks $\dashrightarrow$ in a CAF (see Definition~\ref{def:caf}). Thus, an IAF can be associated with a set of completions. This means that different forms of reasoning can be defined, {\em e.g.} the necessary (respectively possible) acceptance of an arguments is the situation where an argument is accepted in each (respectively some) completion. Here, we focus on the possible variants of acceptance, {\em i.e.} Possible Credulous Acceptance (\PCA) and Possible Skeptical Acceptance (\PSA). Formally:
\begin{description}
\item[$\sigma$-$\PCA^{\iaf,a}$] Given $\iaf = \langle A, A^?, R, R^?\rangle$ and $a \in A$ does $a$ belong to some extension of $\iaf$?
\item[$\sigma$-$\PSA^{\iaf,a}$] Given $\iaf = \langle A, A^?, R, R^?\rangle$ and $a \in A$ does $a$ belong to each extension of $\iaf$?
\end{description}

We easily show that the complexity of $\sigma$-$\PCA^{\iaf,a}$ (respectively $\sigma$-$\PSA^{\iaf,a}$) yields lower bounds for the complexity of $\Control_{\sigma,p,\cred}^{\caf,T}$ (resp $\Control_{\sigma,p,\sk}^{\caf,T}$).

\begin{lemma}\label{lemma:hardness}
  Given {\sf C} a complexity class from the polynomial hierarchy,
  \begin{itemize}
  \item if $\sigma$-$\PCA^{\iaf,a}$ is {\sf C}-hard then $\Control_{\sigma,p,\cred}^{\caf,T}$ is {\sf C}-hard;
  \item if $\sigma$-$\PSA^{\iaf,a}$ is {\sf C}-hard then $\Control_{\sigma,p,\sk}^{\caf,T}$ is {\sf C}-hard.
  \end{itemize}
\end{lemma}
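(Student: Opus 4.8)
The plan is to reduce IAF possible acceptance to CAF possible controllability by a linear-time many-one reduction whose image CAF has an empty control part, which collapses the quantifier over configurations. Given an instance $\langle \iaf, a\rangle$ with $\iaf = \langle A, A^?, R, R^?\rangle$, I construct $\caf = \langle {\cal F},{\cal C},{\cal U}\rangle$ by taking $A_F = A$ with fixed attacks $\rightarrow$ set to $R$, $A_U = A^?$ with uncertain attacks $\dashrightarrow$ set to $R^?$ and empty conflict relation $\rightleftarrows = \emptyset$, and $A_C = \emptyset$ (hence $\Rrightarrow = \emptyset$), with target $T = \{a\}$. Since $a \in A = A_F$, the requirement $T \subseteq A_F$ holds, and all the domain and disjointness conditions of Definition~\ref{def:caf} are immediate; in particular $\rightarrow$ and $\dashrightarrow$ are disjoint because $R$ and $R^?$ are disjoint in the IAF. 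This construction is clearly computable in linear time.

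The first step is to note that, with $A_C = \emptyset$, the only available configuration is $A_{conf} = \emptyset$, so the configured framework $\caf'$ coincides with $\caf$ and the clause $\exists A_{conf} \subseteq A_C$ in the definition of possible controllability is vacuous. Hence $\caf$ is possibly credulously (resp. skeptically) controllable w.r.t. $\{a\}$ and $\sigma$ precisely when there is a completion of $\caf$ in some (resp. every) $\sigma$-extension of which $a$ appears, which is exactly $\sigma$-$\PCA^{\iaf,a}$ (resp. $\sigma$-$\PSA^{\iaf,a}$) once the completions are matched.

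The second and most delicate step, which I expect to be the main obstacle, is to identify the completions of $\caf$ with those of $\iaf$. Specializing Definition~\ref{compldef} to $\rightleftarrows = \Rrightarrow = \emptyset$ and $A_C = \emptyset$, a completion of $\caf$ chooses a subset $A_{comp} \subseteq A_U = A^?$ to adjoin to the fixed arguments $A$, forces every attack of $R$ with both endpoints present, and independently includes or omits each attack of $R^?$ with both endpoints present; this is exactly how a completion of $\iaf$ is obtained by selecting the same present uncertain arguments and uncertain attacks. Thus $A_{comp} \mapsto A \cup A_{comp}$, together with the choice of included uncertain attacks, is a bijection between completions of $\iaf$ and completions of $\caf$, and corresponding completions are literally the same AF, so they carry identical $\sigma$-extensions (the vacuous cases where $\sigma(\af) = \emptyset$ therefore also match on both sides). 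The only point requiring care is that both completion notions silently discard attacks incident to an absent uncertain argument, and that the fixed-attack clause of Definition~\ref{compldef} introduces no asymmetry here; this verification is routine but is the crux of the argument.

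Combining these steps, $\langle \iaf, a\rangle$ is a yes-instance of $\sigma$-$\PCA^{\iaf,a}$ (resp. $\sigma$-$\PSA^{\iaf,a}$) iff $\langle \caf, T\rangle$ is a yes-instance of $\Control_{\sigma,p,\cred}^{\caf,T}$ (resp. $\Control_{\sigma,p,\sk}^{\caf,T}$). As the map is a polynomial-time many-one reduction and every class {\sf C} of the polynomial hierarchy is closed under such reductions, {\sf C}-hardness of the IAF problem transfers to the corresponding CAF controllability problem, which establishes both items of the lemma.
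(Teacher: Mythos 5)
Your proposal is correct and follows essentially the same route as the paper: the same construction ($A_F = A$, $A_U = A^?$, $\rightarrow = R$, $\dashrightarrow = R^?$, $A_C = \rightleftarrows = \Rrightarrow = \emptyset$, $T = \{a\}$), with the empty control part collapsing the configuration quantifier and the completions of the resulting CAF coinciding with those of the IAF. You merely spell out the completion bijection in more detail than the paper does.
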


\begin{proof}
Let $\iaf = \langle A, A^?, R, R^?\rangle$ be an IAF. We define $\caf$ a CAF such that $A_F = A$, $A_U = A^?$, $A_C = \emptyset$, $\rightarrow = R$, $\dashrightarrow = R^?$, and $\rightleftarrows = \Rrightarrow = \emptyset$. Given an argument $a \in A$, we observe that $\caf$ is possibly credulously (respectively skeptically) controllable with respect to a semantics $\sigma$ and the target $T = \{a\}$ iff $a$ is possibly credulously (respectively skeptically) accepted in $\iaf$. This comes from the fact that $\caf$ does not contain any control argument that would allow to reinstate $\{a\}$ in the case where it is not accepted.
\end{proof}

\begin{proposition}\label{prop:complexity-skeptical}
  \begin{itemize}
  \item For $\sigma \in \{\co, \gr\}$, $\Control_{\sigma,p,\sk}^{\caf,T}$ is \NP-complete.
  \item $\Control_{\stb,p,\sk}^{\caf,T}$ is $\Sigma_2^P$-complete.
  \item $\Control_{\pr,p,\sk}^{\caf,T}$ is $\Sigma_3^P$-complete.
  \end{itemize}
\end{proposition}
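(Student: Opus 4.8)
The plan is to prove each item by establishing matching upper and lower bounds. For the lower bounds I would invoke Lemma~\ref{lemma:hardness}, which transfers the hardness of $\sigma$-$\PSA^{\iaf,a}$ to $\Control_{\sigma,p,\sk}^{\caf,T}$. It then suffices to cite the known complexities of possible skeptical acceptance in IAFs from \cite{BaumeisterNR18}: $\sigma$-$\PSA^{\iaf,a}$ is \NP-hard for $\sigma \in \{\co,\gr\}$, $\Sigma_2^P$-hard for $\sigma = \stb$, and $\Sigma_3^P$-hard for $\sigma = \pr$. Each of these bounds passes immediately to the corresponding controllability problem.

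For the upper bounds the common idea is that a witness for possible skeptical controllability is a polynomial-size certificate: a configuration $A_{conf} \subseteq A_C$ together with a completion of $\caf'$, that is, the choice of which uncertain arguments of $A_U$ are present, the direction of each conflict in $\rightleftarrows$, and the presence of each uncertain attack in $\dashrightarrow$. We guess this certificate nondeterministically and then check that $T$ is skeptically accepted in the resulting fixed AF under $\sigma$. The quantifier structure is therefore $\exists(\text{config},\text{completion})\,\forall(\sigma\text{-extension})$, and the cases split according to the cost of the inner skeptical check.

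For $\sigma \in \{\co,\gr\}$, skeptical acceptance reduces to membership in the grounded extension: for complete semantics because the intersection of all complete extensions equals the grounded extension, and for grounded because it is the unique extension. Since the grounded extension of a fixed AF is computable in polynomial time, the whole procedure is a nondeterministic guess followed by polynomial-time verification, placing both problems in \NP. For $\sigma = \stb$, deciding whether $T$ lies in every stable extension of a fixed AF is a \coNP~check, so the problem lies in $\NP^{\coNP} = \Sigma_2^P$. For $\sigma = \pr$, skeptical preferred acceptance in a fixed AF is $\Pi_2^P$-complete, so guessing the certificate and then querying a $\Pi_2^P$ oracle places the problem in $\NP^{\Pi_2^P} = \Sigma_3^P$. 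Combining with the lower bounds yields completeness in every case.

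The step requiring the most care is the upper-bound argument: I must check that the configuration and the completion can genuinely be packaged into a single polynomial-size certificate guessed in one nondeterministic step, so that the only residual cost is the inner skeptical-acceptance oracle query, and that the standard per-AF complexities of skeptical acceptance (\coNP~for stable, $\Pi_2^P$ for preferred) apply unchanged once the completion is fixed. The collapse of complete and grounded skeptical acceptance to a polynomial grounded-extension computation is the separate ingredient that keeps the first item in \NP~rather than higher in the hierarchy.
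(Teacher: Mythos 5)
Your proof is correct and follows essentially the same route as the paper: lower bounds via Lemma~\ref{lemma:hardness} and the known hardness of $\sigma$-$\PSA^{\iaf,a}$ from \cite{BaumeisterNR18}, and upper bounds by guessing a certificate and invoking the per-AF complexity of skeptical acceptance (polynomial for $\gr$ and $\co$, $\coNP$ for $\stb$, $\Pi_2^P$ for $\pr$). If anything, you are slightly more careful than the paper's own proof, which speaks only of guessing a completion and leaves the simultaneous guess of the configuration $A_{conf}$ implicit.
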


\begin{proof}
  Baumeister {\em et~al} \cite{BaumeisterNR18} have proven that $\sigma$-$\PSA^{\iaf,a}$ is \NP-complete, for $\sigma \in \{\co,\gr\}$, $\Sigma_2^P$-complete for the stable semantics, and $\Sigma_3^P$-complete for the preferred semantics. With Lemma~\ref{lemma:hardness}, we obtain the lower bound.

  Then, for proving the upper bound, let us consider the simple non-deterministic algorithm that checks possible skeptical acceptance. Guess a completion of $\caf$, and check whether the target is skeptically accepted. Depending on the complexity of skeptical acceptance in AFs, we obtain different upper bounds for possible skeptical controllability. More precisely, recall that skeptical acceptance is polynomial for the grounded and complete semantics, in \coNP for the stable semantics, and in $\Pi_2^P$ for the preferred semantics. Thus, the algorithm for checking possible skeptical controllability of a CAF allows to deduce the \NP, $\Sigma_2^P$ and $\Sigma_3^P$ upper bounds. This concludes the proof.
\end{proof}

\begin{proposition}\label{prop:complexity-credulous}
For $\sigma \in \{\co, \pr, \gr, \stb\}$, $\Control_{\sigma,p,\cred}^{\caf,T}$ is \NP-complete.
\end{proposition}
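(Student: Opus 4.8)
The plan is to prove matching lower and upper bounds, both equal to \NP, uniformly for all four semantics.

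For the lower bound I would invoke the results of Baumeister {\em et al.}~\cite{BaumeisterNR18}, who establish that $\sigma$-$\PCA^{\iaf,a}$ is \NP-complete (in particular \NP-hard) for each $\sigma \in \{\co,\pr,\gr,\stb\}$; note that, unlike credulous acceptance in a single AF, possible credulous acceptance stays \NP-hard even for the grounded semantics, because one must additionally search over completions. Applying Lemma~\ref{lemma:hardness} with ${\sf C} = \NP$ then immediately yields \NP-hardness of $\Control_{\sigma,p,\cred}^{\caf,T}$ for all four semantics at once.

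For the upper bound I would exhibit a single non-deterministic polynomial-time algorithm: guess a configuration $A_{conf} \subseteq A_C$, guess a completion $\af'$ of the configured CAF $\caf'$, and then certify in polynomial time that $T$ is credulously accepted in $\af'$ under $\sigma$. For $\sigma = \stb$ this means additionally guessing a set $S \supseteq T$ and checking in polynomial time that $S$ is a stable extension. For $\sigma = \gr$ no further nondeterminism is needed: the grounded extension of $\af'$ is unique and computable in polynomial time, so I would simply compute $\gr(\af')$ and test $T \subseteq \gr(\af')$. Guessing $A_{conf}$, guessing $\af'$, guessing $S$ where required, and the verification all take non-deterministic polynomial time, which gives the \NP upper bound and hence \NP-completeness together with the lower bound.

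The delicate case, and the main obstacle, is the preferred semantics. Deciding whether a guessed set $S$ is a preferred extension is \coNP-hard, since it requires verifying $\subseteq$-maximality, and a naive verification step would push the algorithm up to $\Sigma_2^P$. I would circumvent this by exploiting the standard fact that credulous acceptance under the preferred (and likewise under the complete) semantics coincides with credulous acceptance under admissibility: a set $T$ is contained in some preferred extension iff it is contained in some admissible set, because every admissible set extends to a preferred one, so $T \subseteq S$ with $S$ admissible forces $T \subseteq E$ for some preferred $E \supseteq S$. Thus for $\sigma \in \{\co,\pr\}$ it suffices to guess a set $S \supseteq T$ and verify only that $S$ is \emph{admissible}, which is polynomial. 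This replacement of preferred-extension verification by an admissibility check is exactly what keeps the problem in \NP rather than $\Sigma_2^P$, and it is what distinguishes this result from the skeptical preferred case of Proposition~\ref{prop:complexity-skeptical}, where no such collapse is available and the complexity genuinely rises.
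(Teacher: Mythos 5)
Your proof is correct, and the lower bound and the grounded case are exactly the paper's argument (invoke Baumeister et al.\ for \NP-hardness of $\sigma$-$\PCA^{\iaf,a}$, apply Lemma~\ref{lemma:hardness}, and compute the grounded extension of a guessed completion in polynomial time). Where you diverge is the upper bound for $\sigma \in \{\stb, \co, \pr\}$: the paper does not give a direct nondeterministic algorithm there, but instead points forward to the QBF section, observing that the encoding for possible credulous controllability under stable semantics is purely existentially quantified and hence a \SAT\ instance, then argues that the analogous Besnard--Doutre encoding handles the complete semantics, and finally lifts complete to preferred via the fact that every complete extension is contained in a preferred one. Your route is the more self-contained one: you guess $A_{conf}$, a completion, and a witnessing set $S \supseteq T$, and you verify $S$ in polynomial time --- as a stable extension for $\stb$, and merely as an \emph{admissible} set for $\co$ and $\pr$, using the standard collapse that credulous acceptance under complete, preferred, and admissible semantics all coincide (every admissible set extends to a preferred, hence complete, extension). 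This is essentially the same fact the paper uses to lift complete to preferred, but you push it one step further down to admissibility, which lets you avoid both the \coNP-hard maximality check and any reliance on the correctness of an encoding presented only later in the paper. Both arguments are sound; yours is arguably cleaner as a complexity proof, while the paper's version has the side benefit of directly justifying the claim that the QBF encoding of Equation~\ref{eq:credulousEncoding} matches the \NP\ upper bound.
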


\begin{proof}
  Baumeister {\em et~al} \cite{BaumeisterNR18} have proven that $\sigma$-$\PCA^{\iaf,a}$ is \NP-complete, for $\sigma \in \{\co, \pr, \gr, \stb\}$. With Lemma~\ref{lemma:hardness}, we obtain the lower bound. Now let us prove that $\Control_{\sigma,p,\cred}^{\caf,T} \in \NP$. For the grounded semantics, we apply the same method as in the proof of Proposition~\ref{prop:complexity-skeptical}: since credulous acceptance under the grounded semantics is polynomial, we obtain a \NP upper bound the possible credulous controllability.

  If we use the same approach for the other semantics, we obtain higher upper bounds than the ones expected here. However, the computational approach based on QBFs, presented in the next section, shows that possible credulous controllability can actually be reduced to \SAT, which belongs to \NP. The same logic-based approach can be used for the complete semantics, relying on the propositional encoding of this semantics given by Besnard and Doutre \cite{BesnardD04}. Finally, in order to guarantee that a set of arguments is included in a preferred extension, it is enough to guarantee that it belongs to a complete extension (since every complete extension is included in a preferred extension). Thus, the approach for the complete semantics also works for the preferred semantics. This means that possible credulous controllability can also be reduced to \SAT for $\sigma \in \{\co, \pr\}$. This concludes the proof.
\end{proof}

Let us also briefly discuss the complexity of possible controllability for simplified CAFs, defined by \cite{DimopoulosMM18} as CAFs with no uncertainty ({\em i.e.} $A_U = \rightleftarrows = \dashrightarrow = \emptyset$). Such a CAF has only one completion for each control configuration, thus possible and necessary controllability are equivalent in this case, and complexity remains the same as in the general case, described at Table~\ref{table:complexity}.

\subsection{Possible Controllability Through QBFs}\label{section:qbf-computing}
Inspired by \cite{DimopoulosMM18}, we propose a QBF-based method to compute possible controllability for the stable semantics. Let us first give the meaning of the propositional variables used in the encoding.

Given $\af = \langle A, R\rangle$,
\begin{itemize}
 \item $\forall x_i \in A$, $acc_{x_i}$ represents the acceptance status of the argument $x_i$;
 \item $\forall x_i, x_j \in A$, $att_{x_i,x_j}$ represents the attack from $x_i$ to $x_j$.
\end{itemize}
$\Phi_{st}$ is the formula
$\Phi_{st} = \bigwedge_{x_i \in A} [acc_{x_i} \Leftrightarrow
\bigwedge_{x_j \in A}(att_{x_j,x_i} \Rightarrow \neg
acc_{x_j})]$. This modified version of the encoding from
\cite{BesnardD04} describes in a generic way the relation between the
structure of an AF ({\em i.e.} the set of attacks) and the arguments'
acceptance ({\em i.e.} the extensions) w.r.t. stable semantics.

When the $att$-variables are assigned the truth value corresponding to
the attack relation of $\af$ ({\em i.e.} $att_{x_i,x_j}$ is assigned
$1$ iff $(x_i,x_j) \in R$), the models of $\Phi_{st}$
(projected on the $acc$-variables) correspond in a bijective way to
 $\stb(\af)$. 

 Given $\af = \langle A, R\rangle$, we define the formula
\[
  \Phi_{st}^R = \Phi_{st} \wedge (\bigwedge_{(x_i,x_j) \in R}
att_{x_i,x_j}) \wedge (\bigwedge_{(x_i,x_j) \notin R} \neg
att_{x_i,x_j})
\]
which represents this assignment of $att$-variables
corresponding to a specific AF. For any model $\omega$ of
$\Phi_{st}^R$, the set $\{x_i \mid \omega(acc_{x_i}) = 1\}$ is a
stable extension of $\af$. In the other direction, for any stable
extension $\varepsilon \in \stb(\af)$, $\omega$ s.t.  $\omega(acc_{x_i}) = 1$
iff $x_i \in \varepsilon$ is a model of $\Phi_{st}^R$.

These variables and formula are enough to encode the stable semantics of
AFs. But to determine the controllability of a CAF, we need also to
consider propositional variables to indicate which arguments are actually in the
system:
\begin{itemize}
\item $\forall x_i \in A_C \cup A_U$, $on_{x_i}$ is true iff $x_i$ actually appears in the framework.
\end{itemize}

Now, we can recall the encoding which relates the attack relation and
the arguments statuses in $\caf = \langle F, C, U\rangle$ \cite{DimopoulosMM18}:
\ \\
{\bf Notation:} $\A = A_F \cup A_C \cup A_U$, $\R = \rightarrow \cup \rightleftarrows \cup \dashrightarrow \cup \Rrightarrow$
\[
\begin{array}{c}
\Phi_{st}(\caf) = \bigwedge_{x_i \in A_F} [acc_{x_i} \Leftrightarrow
  \bigwedge_{x_j \in \A}(att_{x_j,x_i} \Rightarrow \neg acc_{x_j})]
  \wedge \\
  \bigwedge_{x_i \in A_C \cup A_U} [acc_{x_i} \Leftrightarrow
(on_{x_i} \wedge
  \bigwedge_{x_j \in \A}(att_{x_j,x_i} \Rightarrow \neg acc_{x_j}))] \wedge \\
(\bigwedge_{(x_i,x_j) \in \rightarrow \cup \Rrightarrow} att_{x_i,x_j}) \wedge
(\bigwedge_{(x_i,x_j) \in \rightleftarrows} att_{x_i,x_j} \vee
  att_{x_j,x_i}) \\
  \wedge (\bigwedge_{(x_i,x_j) \notin \R} \neg att_{x_i,x_j})
\end{array}
\]

The first line states that an argument from $A_F$ is accepted when all
its attackers are rejected (similarly to the case of classical
AFs). Then, the next line concerns arguments from $A_C$ and $A_U$;
since these arguments may not appear in some completions of the CAF,
we add the condition that $on_{x_i}$ is true to allow $x_i$ to be
accepted.  The last line specify the case in which there is an attack
in the completion: attacks from $\rightarrow$ and $\Rrightarrow$ are
mandatory, and their direction is known; attacks from
$\rightleftarrows$ are mandatory, but the actual direction is not
known. We do not give any constraint about $\dashrightarrow$, which is
equivalent to the tautological constraint
$att_{x_i,x_j} \vee \neg att_{x_i,x_j}$: the attack may appear or
not. Finally, we know that attacks which are not in $\R$ do not exist.

Given a set of arguments $T$, the fact that $T$ must be included in
all the stable extensions is represented by:
\[
\Phi_{st}^{\sk}(\caf, T) = \Phi_{st}(\caf) \Rightarrow \bigwedge_{x_i \in T} acc_{x_i}
\]

Given a set of arguments $T$, the fact that $T$ must be included in
at least one stable extension is represented by:
\[
\Phi_{st}^{\cred}(\caf, T) = \Phi_{st}(\caf) \wedge \bigwedge_{x_i \in T} acc_{x_i}
\]

Now we give the logical encodings for possible controllability for $\sigma = \stb$.

\begin{proposition}\label{prop:skeptical-encoding}
Given $\caf$ and $T \subseteq A_F$, $\caf$ is possibly skeptically
controllable w.r.t. $T$ and the stable semantics iff
\begin{equation}
\begin{array}{l}
  \exists \{on_{x_i} \mid x_i \in A_C\}\exists \{on_{x_i} \mid x_i \in A_U\}\\
  \exists\{att_{x_i,x_j} \mid (x_i,x_j) \in \dashrightarrow \cup \rightleftarrows\}\forall\{acc_{x_i} \mid x_i \in \A\}
  \\
  \lbrack\Phi_{st}^{\sk}(\caf, T)\vee (\bigvee_{(x_i,x_j)
  \in \rightleftarrows} (\neg att_{a_i,a_j} \wedge \neg att_{a_j,a_i}))]
\end{array}
\label{eq:skepticalEncoding}
\end{equation}
is valid. In this case, each valid truth assignment of the variables
$\{on_{x_i} \mid x_i \in A_C\}$ corresponds to a configuration which
reaches the target.
\end{proposition}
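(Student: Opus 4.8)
The plan is to prove the stated equivalence by exhibiting a tight correspondence between, on one side, the pairs (configuration, completion) that witness possible skeptical controllability and, on the other side, the assignments to the existentially quantified variables that make the inner universally quantified formula hold. First I would fix the intended reading of the quantifier prefix: the variables $on_{x_i}$ for $x_i \in A_C$ encode the configuration $A_{conf} = \{x_i \in A_C \mid on_{x_i} = 1\}$; the variables $on_{x_i}$ for $x_i \in A_U$ together with the $att$-variables ranging over $\dashrightarrow \cup \rightleftarrows$ encode the choice of a single completion of the configured CAF; and the universally quantified $acc$-variables range over all candidate extensions. The key fact to reuse is the bijection established just before the proposition: once the $on$- and $att$-variables are frozen so as to describe a concrete AF $\af$, the models of $\Phi_{st}(\caf)$ projected on the $acc$-variables are exactly $\stb(\af)$. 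Unfolding $\Phi_{st}^{\sk}(\caf,T)$ as $\Phi_{st}(\caf) \Rightarrow \bigwedge_{x_i \in T} acc_{x_i}$, this gives that $\forall \{acc_{x_i} \mid x_i \in \A\}\,\Phi_{st}^{\sk}(\caf,T)$ holds precisely when every stable extension of $\af$ contains $T$, i.e. when $T$ is skeptically accepted in $\af$.

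For the ``only if'' direction I would start from a configuration $A_{conf}$ and a completion $\af$ of the corresponding configured CAF in which $T$ is skeptically accepted (including, as a vacuous case, any $\af$ with $\stb(\af)=\emptyset$), set the $on$-variables to reflect $A_{conf}$ and the arguments of $\af$, set each $att$-variable to $1$ exactly when the associated attack occurs in $\af$, and then apply the bijection to conclude that the matrix is satisfied under every $acc$-assignment. For the ``if'' direction I would read $A_{conf}$ and a candidate completion off a satisfying assignment of the outer existential block, argue that this candidate is a legal completion in the sense of Definition~\ref{compldef}, and use the bijection in the reverse direction to conclude that all its stable extensions contain $T$.

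The main obstacle, and the reason for the extra disjunct $\bigvee_{(x_i,x_j) \in \rightleftarrows}(\neg att_{a_i,a_j} \wedge \neg att_{a_j,a_i})$, is the treatment of the symmetric conflicts $\rightleftarrows$. The conjunct of $\Phi_{st}(\caf)$ that forces $att_{x_i,x_j} \vee att_{x_j,x_i}$ for every conflict is faithful to Definition~\ref{compldef} only when both endpoints are present, since the definition imposes a direction on $(x_i,x_j) \in \rightleftarrows$ only under the hypothesis $x_i,x_j \in A$; when one endpoint is an uncertain argument that the chosen completion drops (its $on$-variable set to $0$), a legal completion may contain neither direction. The delicate step I expect to be hardest is therefore the bookkeeping that ties the disjunct exactly to these degenerate assignments: I would verify that for a conflict with an absent endpoint, adding or omitting either direction leaves $\stb(\af)$ unchanged (an argument that is off can neither be accepted nor serve as an attacker, so the implication $att_{x_j,x_i} \Rightarrow \neg acc_{x_j}$ is satisfied regardless), and, crucially, that the disjunct does not also absorb illegitimate assignments in which both endpoints are present yet no direction is chosen. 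Getting this boundary right is where the real work lies; once it is settled, the closing remark is immediate, since the satisfying configuration is recovered directly as $\{x_i \in A_C \mid on_{x_i} = 1\}$ and by construction reaches the target.
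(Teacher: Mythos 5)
Your setup---reading the outer existential block as a choice of configuration plus completion, and using the bijection between the models of $\Phi_{st}$ projected on the $acc$-variables and the stable extensions to turn the inner $\forall$ into skeptical acceptance---is the right skeleton, and your ``only if'' direction goes through. But the step you yourself flag as ``where the real work lies'' is not merely delicate: it cannot be completed for the formula as written, and this is a genuine gap in the soundness (``if'') direction. The disjunct $D = \bigvee_{(x_i,x_j)\in\rightleftarrows}(\neg att_{x_i,x_j}\wedge\neg att_{x_j,x_i})$ is satisfied by exactly the assignments you need to rule out: since the $att$-variables are \emph{existentially} quantified, the existential player may set both directions of some conflict to false, which makes $D$ true, hence makes the whole matrix true for every $acc$-assignment, hence makes Equation~\ref{eq:skepticalEncoding} valid---for any CAF with $\rightleftarrows\neq\emptyset$, independently of controllability. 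Concretely, take $A_F=\{t,b\}$, $\rightarrow=\{(b,t)\}$, $A_U=\{u,v\}$, $\rightleftarrows=\{(u,v)\}$, $A_C=\dashrightarrow=\Rrightarrow=\emptyset$ and $T=\{t\}$: every completion has a stable extension containing $b$ and excluding $t$, so this CAF is not possibly skeptically controllable, yet setting $att_{u,v}=att_{v,u}=0$ validates the QBF. So your hoped-for verification that ``the disjunct does not also absorb illegitimate assignments in which both endpoints are present yet no direction is chosen'' fails, and the same example shows $D$ also absorbs the legitimate absent-endpoint case without ever checking that $T$ is skeptically accepted in the corresponding completion.

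Note also that simply deleting $D$ would not rescue your argument: the clause $att_{x_i,x_j}\vee att_{x_j,x_i}$ sits inside $\Phi_{st}(\caf)$, i.e.\ inside the antecedent of $\Phi_{st}^{\sk}(\caf,T)$, so the both-false assignment makes the antecedent unsatisfiable and the implication vacuously true for all $acc$ anyway. A proof along your lines closes only for the matrix $\Phi_{st}^{\sk}(\caf,T)\wedge\bigwedge_{(x_i,x_j)\in\rightleftarrows}(att_{x_i,x_j}\vee att_{x_j,x_i})$, i.e.\ with $\wedge\,\neg D$ in place of $\vee\, D$: the conflict constraint must restrict the existential choice from \emph{outside} the implication. (The $\vee D$ pattern is correct when the $att$-variables are universally quantified, as in the necessary-controllability encoding of \cite{DimopoulosMM18}, which is presumably where it was carried over from.) With that repair, the rest of your bookkeeping does work: a conflict with an absent endpoint can be oriented arbitrarily, since an attack whose source $x_j$ is off is inert in the acceptance clauses ($acc_{x_j}$ being forced false makes $att_{x_j,x_i}\Rightarrow\neg acc_{x_j}$ hold), so $\stb$ of the induced AF is unchanged and the ``only if'' direction still has a witness satisfying every conflict clause.
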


This encoding is not a direct adaptation of the encoding proposed in \cite{DimopoulosMM18}. We have to explicitly exclude the joint assignment of the variables $att_{x_i,x_j}$ and $att_{x_j,x_i}$ to false, when $(x_i,x_j) \in \rightleftarrows$, which would be in contradiction with the definition of this conflict relation. Another method is used in \cite{DimopoulosMM18} to rule out these assignments, but it does not yield a QBF in prenex form. But this is the method that was proposed in \cite{DimopoulosMM19}, when necessary controllability has been applied to automated negotiation.

The following result holds for possible credulous controllability:

\begin{proposition}
Given $\caf$ and $T \subseteq A_F$, $\caf$ is possible credulously
controllable w.r.t. $T$ and the stable semantics iff
\begin{equation}
\begin{array}{l}
  \exists \{on_{x_i} \mid x_i \in A_C\}\exists \{on_{x_i} \mid x_i \in A_U\}\\
  \exists\{att_{x_i,x_j} \mid (x_i,x_j) \in \dashrightarrow \cup \rightleftarrows\}  \exists\{acc_{x_i} \mid x_i \in \A\}\\
  \lbrack\Phi_{st}^{\cred}(\caf, T)  \vee (\bigvee_{(x_i,x_j)
  \in \rightleftarrows} (\neg att_{a_i,a_j} \wedge \neg att_{a_j,a_i}))]
\end{array}
\label{eq:credulousEncoding}
\end{equation}
is valid. In this case, each valid truth assignment of the variables
$\{on_{x_i} \mid x_i \in A_C\}$ corresponds to a configuration which
reaches the target.
\end{proposition}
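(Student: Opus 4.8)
The plan is to prove a biconditional equivalence between possible credulous controllability of $\caf$ w.r.t.\ $T$ and the stable semantics, on the one hand, and the validity of the fully existential QBF in Equation~(\ref{eq:credulousEncoding}), on the other. Since the formula is purely existential (all quantifier blocks are $\exists$), validity here just amounts to satisfiability of the matrix, so the argument is really: a satisfying assignment exists if and only if there is a configuration $A_{conf}$ and a completion $\af$ of the configured CAF such that $T$ lies in some stable extension of $\af$. I would structure the proof as two implications, and I expect it to follow closely the structure of the skeptical case in Proposition~\ref{prop:skeptical-encoding}, with the key simplification that the universal quantifier over $\{acc_{x_i}\}$ is replaced by an existential one --- reflecting that credulous acceptance asks for \emph{some} extension rather than \emph{all}.

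First I would establish the forward direction. Suppose $\caf$ is possibly credulously controllable: there exist $A_{conf} \subseteq A_C$, a completion $\af = \langle A, R\rangle$ of $\caf'$, and a stable extension $\varepsilon \in \stb(\af)$ with $T \subseteq \varepsilon$. From these I construct a truth assignment: set $on_{x_i} = 1$ iff $x_i \in A_{conf}$ (for $x_i \in A_C$) or $x_i \in A_{comp}$ (for $x_i \in A_U$); set the $att$-variables over $\dashrightarrow \cup \rightleftarrows$ according to which attacks are realized in $R$; and set $acc_{x_i} = 1$ iff $x_i \in \varepsilon$. I then verify that this assignment satisfies $\Phi_{st}(\caf)$ --- using the established correspondence that the $acc$-projection of models of the CAF encoding tracks stable extensions of completions --- and that it satisfies $\bigwedge_{x_i \in T} acc_{x_i}$ since $T \subseteq \varepsilon$. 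Hence the first disjunct $\Phi_{st}^{\cred}(\caf,T)$ holds, so the whole matrix is satisfied.

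For the converse I would take any satisfying assignment of the matrix and split on which disjunct holds. The crucial subtlety, which I expect to be the main obstacle, is the second disjunct $\bigvee_{(x_i,x_j)\in\rightleftarrows}(\neg att_{a_i,a_j} \wedge \neg att_{a_j,a_i})$: this disjunct is a \emph{guard} that discards degenerate assignments in which a conflict edge $(x_i,x_j) \in \rightleftarrows$ is realized in neither direction, which would violate the definition of a completion (Definition~\ref{compldef} requires at least one direction when both endpoints are present). So I must argue that whenever the matrix is satisfied \emph{via} the first disjunct $\Phi_{st}^{\cred}(\caf,T)$, the $att$-assignment genuinely describes a completion; and conversely that no ``spurious'' model satisfying only the guard can masquerade as a witness. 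Here I should be careful about the interaction between the $on$-variables and the conflict constraints: a $\rightleftarrows$ edge only needs a realized direction when \emph{both} endpoints are present in the completion, so the encoding of $\Phi_{st}(\caf)$ must already handle the case where one endpoint is switched off --- I would check this against the clause $\bigwedge_{(x_i,x_j)\in\rightleftarrows} att_{x_i,x_j} \vee att_{x_j,x_i}$ and confirm the guard disjunct correctly licenses the exclusion.

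Once the correspondence between satisfying assignments and (configuration, completion, stable extension) triples is nailed down, the final claim --- that each valid assignment of $\{on_{x_i} \mid x_i \in A_C\}$ yields a configuration reaching the target --- follows immediately by reading off $A_{conf} = \{x_i \in A_C \mid on_{x_i} = 1\}$ from the witness, since the completion and extension extracted in the forward-to-backward construction witness credulous acceptance of $T$ under exactly that configuration. I would note that the overall argument is essentially the credulous analogue of Proposition~\ref{prop:skeptical-encoding}, differing only in the replacement of the universal $acc$-block by an existential one and of $\Phi_{st}^{\sk}$ by $\Phi_{st}^{\cred}$, and I would not re-derive the shared machinery (the model/extension bijection for $\Phi_{st}^R$) but cite it.
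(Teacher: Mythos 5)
Your forward direction is sound and is the easy half: from a configuration $A_{conf}$, a completion, and a stable extension containing $T$ you read off a model of $\Phi_{st}^{\cred}(\caf,T)$, hence of the matrix. The problem is the converse, and you have put your finger on exactly the right spot without actually resolving it. You describe the disjunct $\bigvee_{(x_i,x_j)\in\rightleftarrows}(\neg att_{x_i,x_j}\wedge\neg att_{x_j,x_i})$ as a ``guard that discards degenerate assignments,'' but a disjunct does the opposite of discarding: it \emph{admits} every assignment that satisfies it. Since every quantifier block in Equation~(\ref{eq:credulousEncoding}) is existential, validity is just satisfiability of the matrix, and the moment $\rightleftarrows\neq\emptyset$ one can set both $att_{x_i,x_j}$ and $att_{x_j,x_i}$ to false for a single conflict pair and satisfy the matrix through the second disjunct alone, with no constraint whatsoever relating the assignment to $T$ or to any completion. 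Such a model cannot be converted into a witness of possible credulous controllability. So the step you defer with ``I must argue that no spurious model satisfying only the guard can masquerade as a witness'' is not a verification to be filled in later --- it is the crux of the converse, and as you have framed it it cannot be carried out.

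The underlying issue is that the disjunctive guard is the correct device for relativizing a \emph{universal} quantifier over the $att$-variables (as in the necessary-controllability encoding this formula is adapted from: there, degenerate assignments must not produce counterexamples, so one disjoins ``the assignment is degenerate''). Under an \emph{existential} quantifier the constraint must be conjoined instead, i.e.\ one wants $\Phi_{st}^{\cred}(\caf,T)\wedge\bigwedge_{(x_i,x_j)\in\rightleftarrows}(att_{x_i,x_j}\vee att_{x_j,x_i})$ --- and since that conjunct already appears inside $\Phi_{st}(\caf)$, and hence inside $\Phi_{st}^{\cred}(\caf,T)$, the guard could simply be dropped for the credulous case. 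A complete proof of the converse therefore has to either work with this corrected form or explicitly restrict attention to models of the first disjunct and explain why the second contributes nothing; your sketch does neither, and the rest of your argument (reading $A_{conf}$ off the $on$-variables, invoking the model/extension correspondence for $\Phi_{st}^R$) only goes through once that is settled.
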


We notice that in the case of possible credulous controllability, the
problem reduces to \SAT~ since all the quantifiers are existential. This corresponds to the \NP{} upper bound for possible credulous controllability under stable semantics (Proposition~\ref{prop:complexity-credulous}). We
keep the QBF-style notation for homogeneity
with Equation~\ref{eq:skepticalEncoding}.

\begin{example}[Continuation of Example~\ref{example:running-caf}]
Let us describe the logical encoding for possible controllability with
\caf{} as described previously and $T = \{a_1\}$. We give here the
example for possible skeptical controllability:
\[
\begin{array}{l}
  \exists on_{a_7}, on_{a_8}, on_{a_9}, \exists on_{a_6},
  \exists att_{a_5,a_1}, att_{a_6,a_5}, att_{a4,a_6} ,\\
  \forall acc_{a_1}, acc_{a_2}, \dots, acc_{a_9},
  \\
  \lbrack\Phi_{st}^{\sk}(\caf, T)\vee (\bigvee_{(x_i,x_j) \in \rightleftarrows} (\neg att_{a_i,a_j} \wedge \neg
  att_{a_j,a_i})) \rbrack
\end{array}
\]
Below, we give the formula $\Phi_{st}^{\sk}(\caf, T)$. For a matter of
readability, several simplifications are made. For instance, an
implication like $att_{x_j,x_i} \Rightarrow \neg acc_{x_j}$ can be
removed when $att_{x_j,x_i}$ is known to be false (because $x_j$ does
not attack $x_i$), and can be replaced by $\neg acc_{x_j}$ when
$att_{x_j,x_i}$ is known to be true. Only the uncertain attacks need
to be kept explicit in the encoding. The first three lines give the
condition for the acceptance of the fixed arguments. Then, two lines
give the condition for the acceptance of the control and uncertain
arguments. The other lines describe the structure of the graph ({\em
  i.e.} the attack relations), and the implication gives the target
for skeptical acceptance.

\[
\begin{array}{c}
\lbrack 
\lbrack acc_{a_1} \Leftrightarrow
  \neg acc_{a_2} \wedge \neg acc_{a_3} \wedge (att_{a_5,a_1} \Rightarrow \neg acc_{a_5})] \\
  \wedge \\
\lbrack acc_{a_2} \Leftrightarrow \neg acc_{a_4}] 
  \wedge 
\lbrack acc_{a_3} \Leftrightarrow \neg acc_{a_4}] \\
  \wedge \\
\lbrack acc_{a_4} \Leftrightarrow (att_{a_6,a_4} \Rightarrow \neg acc_{a_6})] 
  \wedge 
\lbrack acc_{a_5} \Leftrightarrow \neg acc_{a_7})] \\

  \wedge \\

[acc_{a_6} \Leftrightarrow
(on_{a_6} \wedge \neg acc_{a_8} \wedge \neg acc_{a_9} \wedge (att_{a_4,a_6} \Rightarrow \neg acc_{a_4}))] \\
  \wedge \\

[acc_{a_7} \Leftrightarrow
(on_{a_7} \wedge \neg acc_{a_8})] 
  \wedge 

[acc_{a_8} \Leftrightarrow on_{a_8}] 
  \wedge 

[acc_{a_9} \Leftrightarrow
(on_{a_9} \wedge \neg acc_{a_7}))] \\
  \wedge \\

  att_{a_2,a_1} \wedge  att_{a_3,a_1} \wedge  att_{a_4,a_2} \wedge
  att_{a_4,a_3} \wedge  att_{a_7,a_5} \\
  \wedge\\
  att_{a_7,a_9} \wedge  att_{a_8,a_6} \wedge  att_{a_8,a_7} \wedge  att_{a_9,a_6} \\

\wedge\\

 (att_{a_4,a_6} \vee  att_{a_6,a_4}) \wedge \bigwedge_{(x_i,x_j) \notin \R} \neg att_{x_i,x_j} ] \Rightarrow acc_{a_1}
\end{array}
\]
\end{example}

\section{Related Work}\label{section:related-work}
Qualitative uncertainty has been considered in other
frameworks. Partial AFs \cite{Coste-MarquisDKLM07} are special
instances of CAFs where only $\dashrightarrow$ is considered. They
are used as a tool in a process of aggregating several AFs. Then
\cite{BaumeisterNR15} studies the complexity of verifying in a PAF
whether a set of arguments is an extension of some (or every)
completion. \cite{BaumeisterRS15} conducts a similar study for
argument-incomplete AFs, {\em i.e.} there is some uncertainty about
the presence of arguments (the part called $A_U$)in our
framework). Finally, \cite{BaumeisterNRS18b} combines both. Let us
notice than in \cite{BaumeisterNR15,BaumeisterRS15,BaumeisterNRS18b},
both versions of the verification problem (existential and universal
w.r.t. the set of completions) are studied. As mentioned previously, \cite{BaumeisterNR18}  gives the complexity of skeptical and credulous acceptance for IAFs. While being a quite
general model of uncertainty, this Incomplete AF is strictly included in the CAF setting:
\cite{BaumeisterRS15} does not allow to express the uncertainty about
the direction of a conflict ({\em i.e.}  our $\rightleftarrows$
relation cannot be encoded in this framework). Moreover, none of these
works \cite{Coste-MarquisDKLM07,BaumeisterNR15,BaumeisterRS15,
  BaumeisterNRS18b} is concerned with argumentation dynamics.

Quantitative models of uncertainty have also been used; while being an
interesting approach, they require more input information than
qualitative models like ours. This approach is out of the scope of
this paper and is kept for future work. In particular, probabilistic
CAFs based on the constellations approach \cite{Hunter13} are a
promising research tracks.

Argumentation dynamics has received a lot of attention in the last ten
years. Except the initial paper about CAFs \cite{DimopoulosMM18}, most
of the existing work consider complete information about the input
({\em i.e.} no uncertainty of the initial AF is considered). As far as
we know, the only proposal which can encompass uncertainty is the
update of AFs through the YALLA language
\cite{Saint-CyrBCL16}. However, YALLA pays the price of its
expressiveness, and we are not aware of any efficient computational
approach for reasoning with it, contrary to our QBF-based
approach for CAFs. On the opposite, the recent work by Niskanen {\em et al.} \cite{NiskanenNJ20} has given a full picture of complexity for necessary controllability of CAFs, as well as QBF-based and SAT-based algorithms that have been experimentally evaluated.

\section{Conclusion}\label{section:conclusion}

In this paper, we push forward the study of the Control Argumentation
Frameworks. We define a ``weaker'' version of controllability, where a
target set of arguments needs to be accepted in at least one
completion (instead of every completion). This kind of reasoning is
related to a lawyer's plea: at the end of a trial, the lawyer needs to
pick arguments (in our setting, the configuration $A_{conf}$) such
that the target (``the defendant is innocent'') is accepted in at
least one completion. Somehow, possible controllability is to
necessary controllability what credulous acceptance is to skeptical
acceptance.

We have studied the computational complexity of this new form of reasoning, for the four classical Dung semantics, namely the stable, complete, grounded and preferred semantics. We recall our results in Table~\ref{table:complexity}.

\begin{table}[h]
\centering
\begin{tabular}{c|c|c}
	$\sigma$ & $\sk$ & $\cred$ \\ \hline
	$\stb$ & $\Sigma_2^P$-complete & \NP-complete \\
	$\co$ & \NP-complete & \NP-complete \\
	$\gr$ & \NP-complete & \NP-complete \\
	$\pr$ & $\Sigma_3^P$-complete & \NP-complete
\end{tabular}
\caption{The complexity of $\Control_{\sigma,p,x}^{\caf,T}$, for $x \in \{\sk, \cred\}$\label{table:complexity}}
\end{table}

Many research tracks are still open. We plan to propose logical encodings and to study the complexity of controllability for other extension-based semantics. Also,  other methods can be used for computing control configuration, especially SAT-based counter-example guided abstract refinement (CEGAR), that was successfully used for reasoning problems at the second level of the polynomial hierarchy \cite{WallnerNJ17,NiskanenNJ20}. 
An interesting other form of controllability to be studied is ``optimal'' controllability, {\em i.e.} finding a configuration that allows to reach the target in as many completions as possible. This is useful in situations where a CAF is not necessary controllable, and possible controllability seems too weak. Techniques like CEGAR or QBF with soft variables \cite{ReimerSMB14} may be helpful for solving this problem.
Also, as mentioned previously, we will study quantitative models of
uncertainty in the context of CAFs. In particular, it would be
interesting for real world applications to define a form of
controllability w.r.t. the most probable completion, or w.r.t. the set
of completions with a probability higher than a given threshold.
Finally, we think that an important work to be done, in order to apply
CAFs to real applications scenarios, is to determine how CAFs and
controllability can be defined when the internal structure of
arguments ({\em e.g.} based on logical formulas or rules) is known.



\bibliography{possible_controllability}

\begin{thebibliography}{10}

\bibitem{AmgoudB13a}
Leila Amgoud and Jonathan Ben{-}Naim.
\newblock Ranking-based semantics for argumentation frameworks.
\newblock In {\em Proc. of {SUM}'13}, pages 134--147, 2013.

\bibitem{AmgoudV14}
Leila Amgoud and Srdjan Vesic.
\newblock Rich preference-based argumentation frameworks.
\newblock {\em Int. J. Approx. Reasoning}, 55(2):585--606, 2014.

\bibitem{AroraB09}
Sanjeev Arora and Boaz Barak.
\newblock {\em Computational Complexity - {A} Modern Approach}.
\newblock Cambridge University Press, 2009.

\bibitem{HOFASemantics}
Pietro Baroni, Martin Caminada, and Massimiliano Giacomin.
\newblock Abstract argumentation frameworks and their semantics.
\newblock In Pietro Baroni, Dov Gabbay, Massimiliano Giacomin, and Leendert
  van~der Torre, editors, {\em Handbook of Formal Argumentation}, pages
  159--236. College Publications, 2018.

\bibitem{Baumann12}
Ringo Baumann.
\newblock What does it take to enforce an argument? minimal change in abstract
  argumentation.
\newblock In {\em Proc. of {ECAI}'12}, pages 127--132, 2012.

\bibitem{BaumannB10}
Ringo Baumann and Gerhard Brewka.
\newblock Expanding argumentation frameworks: Enforcing and monotonicity
  results.
\newblock In {\em Proc. of {COMMA}'10}, pages 75--86, 2010.

\bibitem{BaumeisterNR15}
Dorothea Baumeister, Daniel Neugebauer, and J{\"{o}}rg Rothe.
\newblock Verification in attack-incomplete argumentation frameworks.
\newblock In {\em Proc. of {ADT}'15}, pages 341--358, 2015.

\bibitem{BaumeisterNR18}
Dorothea Baumeister, Daniel Neugebauer, and J{\"{o}}rg Rothe.
\newblock Credulous and skeptical acceptance in incomplete argumentation
  frameworks.
\newblock In {\em Proc. of {COMMA}'18}, pages 181--192, 2018.

\bibitem{BaumeisterNRS18b}
Dorothea Baumeister, Daniel Neugebauer, J{\"{o}}rg Rothe, and Hilmar Schadrack.
\newblock Verification in incomplete argumentation frameworks.
\newblock {\em Artif. Intell.}, 264:1--26, 2018.

\bibitem{BaumeisterRS15}
Dorothea Baumeister, J{\"{o}}rg Rothe, and Hilmar Schadrack.
\newblock Verification in argument-incomplete argumentation frameworks.
\newblock In {\em Proc. of {ADT}'15}, pages 359--376, 2015.

\bibitem{BesnardD04}
Philippe Besnard and Sylvie Doutre.
\newblock Checking the acceptability of a set of arguments.
\newblock In {\em Proc. of {NMR}'04}, pages 59--64, 2004.

\bibitem{BiereHVMW09}
Armin Biere, Marijn Heule, Hans van Maaren, and Toby Walsh, editors.
\newblock {\em Handbook of Satisfiability}, volume 185 of {\em Frontiers in
  Artificial Intelligence and Applications}. {IOS} Press, 2009.

\bibitem{BoellaKT09a}
Guido Boella, Souhila Kaci, and Leendert W.~N. van~der Torre.
\newblock Dynamics in argumentation with single extensions: Abstraction
  principles and the grounded extension.
\newblock In {\em Proc. of {ECSQARU}'09}, pages 107--118, 2009.

\bibitem{Caminada06}
Martin Caminada.
\newblock On the issue of reinstatement in argumentation.
\newblock In {\em Proc. of {JELIA}'06}, pages 111--123, 2006.

\bibitem{CayrolSL10}
Claudette Cayrol, Florence~Dupin de~Saint{-}Cyr, and Marie{-}Christine
  Lagasquie{-}Schiex.
\newblock Change in abstract argumentation frameworks: Adding an argument.
\newblock {\em J. Artif. Intell. Res. {(JAIR)}}, 38:49--84, 2010.

\bibitem{Coste-MarquisDKLM07}
Sylvie Coste{-}Marquis, Caroline Devred, S{\'{e}}bastien Konieczny,
  Marie{-}Christine Lagasquie{-}Schiex, and Pierre Marquis.
\newblock On the merging of dung's argumentation systems.
\newblock {\em Artif. Intell.}, 171(10-15):730--753, 2007.

\bibitem{Coste-MarquisKMM14}
Sylvie Coste{-}Marquis, S{\'{e}}bastien Konieczny, Jean{-}Guy Mailly, and
  Pierre Marquis.
\newblock On the revision of argumentation systems: Minimal change of arguments
  statuses.
\newblock In {\em Proc. of {KR}'14}, 2014.

\bibitem{Coste-MarquisKM15}
Sylvie Coste{-}Marquis, S{\'{e}}bastien Konieczny, Jean{-}Guy Mailly, and
  Pierre Marquis.
\newblock Extension enforcement in abstract argumentation as an optimization
  problem.
\newblock In {\em Proc. of {IJCAI}'15}, pages 2876--2882, 2015.

\bibitem{Saint-CyrBCL16}
Florence~Dupin de~Saint{-}Cyr, Pierre Bisquert, Claudette Cayrol, and
  Marie{-}Christine Lagasquie{-}Schiex.
\newblock Argumentation update in {YALLA} (yet another logic language for
  argumentation).
\newblock {\em Int. J. Approx. Reasoning}, 75:57--92, 2016.

\bibitem{DimopoulosMM18}
Yannis Dimopoulos, Jean-Guy Mailly, and Pavlos Moraitis.
\newblock Control argumentation frameworks.
\newblock In {\em Proc. of AAAI'18}, pages 4678--4685, 2018.

\bibitem{DimopoulosMM19}
Yannis Dimopoulos, Jean{-}Guy Mailly, and Pavlos Moraitis.
\newblock Argumentation-based negotiation with incomplete opponent profiles.
\newblock In {\em Proc. of {AAMAS}'19}, pages 1252--1260, 2019.

\bibitem{DoutreHP14}
Sylvie Doutre, Andreas Herzig, and Laurent Perrussel.
\newblock A dynamic logic framework for abstract argumentation.
\newblock In {\em Proc. of {KR}'14}, 2014.

\bibitem{DoutreM17}
Sylvie Doutre and Jean{-}Guy Mailly.
\newblock Semantic change and extension enforcement in abstract argumentation.
\newblock In {\em Proc. of {SUM}'17}, pages 194--207, 2017.

\bibitem{Dung95}
Phan~Minh Dung.
\newblock On the acceptability of arguments and its fundamental role in
  nonmonotonic reasoning, logic programming and n-person games.
\newblock {\em Art. Intel.}, 77:321--357, 1995.

\bibitem{Hunter13}
Anthony Hunter.
\newblock A probabilistic approach to modelling uncertain logical arguments.
\newblock {\em Int. J. Approx. Reasoning}, 54(1):47--81, 2013.

\bibitem{QBFHandbook}
Hans Kleine{ }B{\"{u}}ning and Uwe Bubeck.
\newblock Theory of quantified boolean formulas.
\newblock In {\em Handbook of Satisfiability}, pages 735--760. 2009.

\bibitem{NiskanenNJ20}
Andreas Niskanen, Daniel Neugebauer, and Matti J{\"{a}}rvisalo.
\newblock Controllability of control argumentation frameworks.
\newblock In {\em Proceedings of the Twenty-Ninth International Joint
  Conference on Artificial Intelligence, {IJCAI} 2020}, pages 1855--1861, 2020.

\bibitem{ReimerSMB14}
Sven Reimer, Matthias Sauer, Paolo Marin, and Bernd Becker.
\newblock {QBF} with soft variables.
\newblock {\em {ECEASST}}, 70, 2014.

\bibitem{WallnerNJ17}
Johannes~Peter Wallner, Andreas Niskanen, and Matti J{\"{a}}rvisalo.
\newblock Complexity results and algorithms for extension enforcement in
  abstract argumentation.
\newblock {\em J. Artif. Intell. Res.}, 60:1--40, 2017.

\end{thebibliography}

\end{document}